\begin{document}
\title{Reward Potentials for Planning with Learned Neural Network Transition Models}
\titlerunning{Reward Potentials for Planning with Learned Neural Networks}
%
\author{Buser Say\inst{1,2}\thanks{This work is done during author's visit to Australian National University.} \and
Scott Sanner\inst{1,2} \and
Sylvie Thi\'ebaux\inst{3}}
\authorrunning{Say et al.}
%
\institute{University of Toronto, Canada 
\email{\{bsay,ssanner\}@mie.utoronto.ca}\\
\and Vector Institute, Canada\\
\and Australian National University, Australia
\email{sylvie.thiebaux@anu.edu.au}}
\maketitle              
\begin{abstract}
Optimal planning with respect to learned neural network (NN) models in continuous action and 
state spaces using mixed-integer linear programming (MILP) is a challenging task 
for branch-and-bound solvers due to the poor linear relaxation of the underlying MILP 
model. For a given set of features, potential heuristics provide an efficient framework 
for computing bounds on cost (reward) functions. In this paper, we model the problem of 
finding optimal potential bounds for learned NN models as a bilevel program, and solve 
it using a novel finite-time constraint generation algorithm. We then strengthen the linear 
relaxation of the underlying MILP model by introducing constraints to bound the reward 
function based on the precomputed reward potentials. Experimentally, we show that our 
algorithm efficiently computes reward potentials for learned NN models, and that the overhead of 
computing reward potentials is justified by the overall strengthening of the underlying MILP 
model for the task of planning over long horizons.

\keywords{Neural Networks \and Potential Heuristics \and Planning \and Constraint Generation.}
\end{abstract}

\section{Introduction}

\noindent Neural networks (NNs) have significantly improved the ability of autonomous systems 
to learn and make decisions for complex tasks such as image recognition~\cite{Krizhevsky2012}, 
speech recognition~\cite{Deng2013}, and natural language processing~\cite{Collobert2011}. 
As a result of this success, formal methods based on representing the decision making problem 
with NNs as a mathematical programming model, such as verification of 
NNs~\cite{Katz2017,Narodytska2018} and optimal planning with respect to the learned 
NNs~\cite{Say2017} have been studied.

In the area of learning and planning, Hybrid Deep MILP Planning~\cite{Say2017} (HD-MILP-Plan) 
has introduced a two-stage data-driven framework that (i) learns transitions models with 
continuous action and state spaces using NNs, and (ii) plans optimally with respect to 
the learned NNs using a mixed-integer linear programming (MILP) model. It has been 
experimentally shown that optimal planning with respect to the learned NNs~\cite{Say2017} 
presents a challenging task for branch-and-bound (B\&B) solvers~\cite{IBM2019} due 
to the poor linear relaxation of the underlying MILP model that has a large number of 
\textit{big-M} constraints. 

In this paper, we focus on the important problem of improving the efficiency 
of MILP models for decision making with learned NNs. In order to tackle this challenging 
problem, we build on potential heuristics~\cite{Pommerening2015,Jendrik2015},
which provide an efficient framework for computing a lower bound on the cost of a 
given state as a function of its features. In this work, we describe the problem of 
finding optimal potential bounds for learned NN models with 
continuous inputs and outputs (i.e., continuous action and state spaces) as a bilevel program, 
and solve it using a novel finite-time constraint generation algorithm. Features of our linear 
potential heuristic are defined over the hidden units of the learned NN model, thus providing 
a rich and expressive candidate feature space. We use our constraint 
generation algorithm to compute the potential contribution (i.e., reward potential) of each hidden 
unit to the reward function of the HD-MILP-Plan problem. The precomputed reward potentials are 
then used to construct linear constraints that bound the reward function of HD-MILP-Plan, and 
provide a tighter linear relaxation for B\&B optimization by exploring smaller number of nodes 
in the search tree. Experimentally, we show 
that our constraint generation algorithm efficiently computes reward potentials for learned 
NNs, and that the overhead computation is justified by the overall strengthening of the 
underlying MILP model for the task of planning over long horizons. 

Overall this work 
bridges the gap between two seemingly distant literatures -- research on planning heuristics 
for discrete spaces and decision making with learned NN models in continuous action and state 
spaces. Specifically, we show that 
data-driven NN models for planning can benefit from advances in heuristics and from their impact on 
the efficiency of search in B\&B optimization.

\section{Preliminaries}

We review the HD-MILP-Plan framework for optimal planning~\cite{Say2017} with learned NN models, potential heuristics~\cite{Pommerening2015}  
as well as bilevel programming~\cite{Bard2000}.

\subsection{Deterministic Factored Planning Problem Definition}

A deterministic factored planning problem is a tuple $\Pi = \langle S,A,C,T,I,G,R \rangle$ 
where $S = \{s_1, \dots, s_{n}\}$ and $A = \{a_1, \dots, a_{m}\}$ are sets of state and 
action variables with continuous domains, 
$C: \mathbb{R}^{|S|} \times \mathbb{R}^{|A|} \rightarrow \{\mathit{true},\mathit{false}\}$ is a function 
that returns true if action and state variables satisfy 
global constraints, $T: \mathbb{R}^{|S|} \times \mathbb{R}^{|A|} \to \mathbb{R}^{|S|}$ denotes the 
stationary transition function, and $R: \mathbb{R}^{|S|} \times \mathbb{R}^{|A|} \rightarrow \mathbb{R}$ 
is the reward function. Finally, 
$I: \mathbb{R}^{|S|} \rightarrow \{\mathit{true},\mathit{false}\}$ represents the initial state constraints, 
and $G: \mathbb{R}^{|S|} \rightarrow \{\mathit{true},\mathit{false}\}$ represents the goal 
constraints. For horizon $H$, a solution $\pi = \langle \bar{A}^1, \dots, \bar{A}^H \rangle$ to problem 
$\Pi$ (i.e. a plan for $\Pi$) is a value assignment to the action variables with values 
$\bar{A}^{t} = \langle \bar{a}^{t}_1, \dots, \bar{a}^{t}_{|A|} \rangle \in \mathbb{R}^{|A|}$ for all time 
steps $t\in \{1,\dots,H\}$ (and state variables with values 
$\bar{S}^{t} = \langle \bar{s}^{t}_1, \dots, \bar{s}^{t}_{|S|} \rangle \in \mathbb{R}^{|S|}$ 
for all time steps $t\in \{1,\dots,H+1\}$) such that 
$T(\langle \bar{s}^{t}_1, \dots, \bar{s}^{t}_{|S|} , \bar{a}^{t}_1, \dots, \bar{a}^{t}_{|A|} \rangle) = \bar{S}^{t+1}$ 
and $C(\langle \bar{s}^{t}_1, \dots, \bar{s}^{t}_{|S|} , \bar{a}^{t}_1, \dots, \bar{a}^{t}_{|A|} \rangle) = true$ 
for all time steps $t\in \{1,\dots,H\}$, and the initial and goal state constraints are satisfied, i.e. 
$I(\bar{S}^{1}) = true$ and $G(\bar{S}^{H+1}) = true$, where $\bar{x}^{t}$ denotes the value of variable 
$x \in A \cup S$ at time step $t$. Similarly, an optimal solution to $\Pi$ is a plan such that the total reward 
$\sum_{t=1}^{H}R(\langle \bar{s}^{t+1}_1, \dots, \bar{s}^{t+1}_{|S|} , \bar{a}^{t}_1, \dots, \bar{a}^{t}_{|A|} \rangle)$ 
is maximized. For notational simplicity, we denote the tuple of variables $\langle x_{d_1}, \dots, x_{d_{|D|}} \rangle$ as $\langle x_d | d \in D \rangle$ given set $D$, and use the symbol ${}^\frown$ for 
the concatenation of two tuples. Given the notations and the description of the planning problem, 
we next describe a data-driven planning framework using learned NNs.

\subsection{Planning with Neural Network Learned Transition Models}

Hybrid Deep MILP Planning~\cite{Say2017} (HD-MILP-Plan) is a two-stage 
data-driven framework for learning and solving planning problems. Given samples 
of state transition data, the first stage of the HD-MILP-Plan process learns the transition 
function $\tilde{T}$ using a NN with Rectified Linear Units (ReLUs)~\cite{Nair2010} 
and linear activation units. In the second stage, the learned transition function 
$\tilde{T}$ is used to construct the learned planning problem 
$\tilde{\Pi} = \langle S,A,C,\tilde{T},I,G,R \rangle$. As shown in 
Figure~\ref{fig:hdmilpplan}, the learned transition function $\tilde{T}$ is 
sequentially chained over the horizon $t\in \{1,\dots,H\}$, and compiled into a MILP. 
Next, we review the MILP compilation of HD-MILP-Plan.

\begin{figure}
\centering
\includegraphics[width=\linewidth]{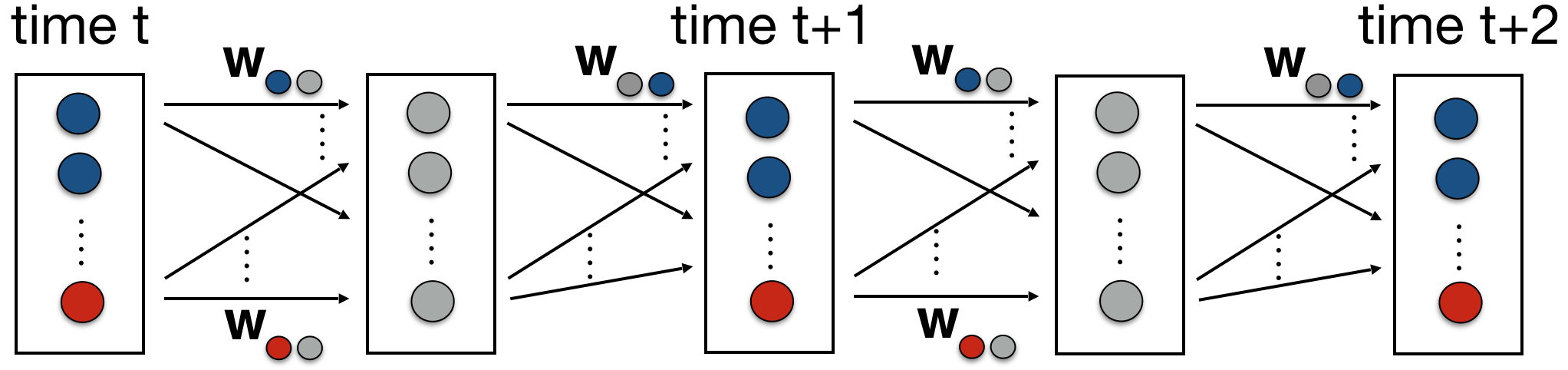}
\caption{Visualization of the learning and planning framework~\cite{Say2017}, 
where blue circles represent state variables $S$, red circles represent action 
variables $A$, gray circles represent ReLUs $U$ and $\textbf{w}$ 
represent the weights of a NN. During the learning stage, the weights $\textbf{w}$ 
are learned from data. In the planning stage, the weights are fixed and the planner 
optimizes a given total (cumulative) reward function with respect to the set of free 
action variables $A$ and state variables $S$.}
\label{fig:hdmilpplan}
\end{figure}

\subsection{Mixed-Integer Linear Programming Compilation of HD-MILP-Plan}

We begin with all notation necessary for HD-MILP-Plan.

\subsubsection{Parameters}

\begin{itemize}
\item $U$ is the set of ReLUs in the neural network.
\item $O$ is the set of output units in the neural network.
\item $w_{i,j}$ denotes the learned weight of the neural network 
between units $i$ and $j$.
\item $A(u)$ is the set of action variables connected as inputs to unit $u \in U \cup O$.
\item $S(u)$ is the set of state variables connected as inputs to unit $u \in U \cup O$.
\item $U(u)$ is the set of ReLUs connected as inputs to unit $u \in U \cup O$.
\item $O(s)$ specifies the output unit that predicts the value of state variable $s \in S$.
\item $B(u)$ is a constant representing the bias of unit $u \in U\cup O$.
\item $M$ is a large constant used in the big-M constraints.
\end{itemize}

\subsubsection{Decision Variables}

\begin{itemize}
\item ${X}_{a,t}$ is a decision variable with continuous domain denoting the value of action variable $a \in A$ at time step $t$.
\item ${Y}_{s,t}$ is a decision variable with continuous domain denoting the value of state variable $s\in S$ at time step $t$.
\item ${P}_{u,t}$ is a decision variable with continuous domain denoting the output of ReLU $u\in U$  at time step $t$.
\item ${P}^{b}_{u,t} = 1$ if ReLU $u\in U$ is activated 
at time step $t$, 0 otherwise (i.e., ${P}^{b}_{u,t}$ is a Boolean decision variable).
\end{itemize}

\subsubsection{MILP Compilation}

\begin{align}
&\text{maximize }\sum_{t=1}^{H} R(\langle Y_{s,t+1} | s \in S \rangle {}^\frown \langle X_{a,t} | a\in A \rangle)\label{ref:HD_0}\\
&\text{subject to} \nonumber\\
&I(\langle Y_{s,1} |s \in S \rangle) \label{ref:HD_1}\\
&C(\langle Y_{s,t} | s \in S \rangle {}^\frown \langle X_{a,t} | a \in A \rangle) \label{ref:HD_2}\\
&G(\langle Y_{s,H+1}|s \in S \rangle) \label{ref:HD_3}\\
&{P}_{u,t} \leq M {P}^{b}_{u,t} \quad \forall{u \in U}\label{ref:HD_4}\\
&{P}_{u,t} \leq M (1-{P}^{b}_{u,t}) + \mathit{In}(u,t) \quad \forall{u \in U}\label{ref:HD_5}\\
&{P}_{u,t}\geq \mathit{In}(u,t) \quad \forall{u \in U}\label{ref:HD_6}\\
&{Y}_{s,t+1} = \mathit{In}(u,t) \quad \forall{u \in O(s), s \in S}\label{ref:HD_7}
\end{align}
for all time steps $t=1,\dots,H$ except for constraints (\ref{ref:HD_1})-(\ref{ref:HD_3}). Expression $\mathit{In}(u,t)$ denotes the total weighted input of unit $u \in U\cup O$ at time step $t$, and is equivalent to $B(u) + \sum_{u'\in U(u)}{w_{u',u}}{P}_{u',t} + \sum_{s \in S(u)}{w_{s,u}}{Y}_{s,t} + \sum_{a \in A(u)}{w_{a,u}}{X}_{a,t}$.

In the above MILP, the objective function (\ref{ref:HD_0}) maximizes the sum of 
rewards over a given horizon $H$. Constraints (\ref{ref:HD_1}-\ref{ref:HD_3}) 
ensure the initial state, global and goal state constraints are satisfied. Constraints 
(\ref{ref:HD_4}-\ref{ref:HD_7}) model the learned transition function $\tilde{T}$. Note that while 
constraints (\ref{ref:HD_4}-\ref{ref:HD_6}) are sufficient to encode the piecewise linear 
activation behaviour of ReLUs, the use of big-M constraints (\ref{ref:HD_4}-\ref{ref:HD_5}) can hinder the overall performance 
of the underlying B\&B solvers that rely on the linear relaxation of the MILP. 
Therefore next, we turn to potential heuristics that will be used to strengthen the MILP 
compilation of HD-MILP-Plan.

\subsection{Potential Heuristics}

Potential heuristics~\cite{Pommerening2015,Jendrik2015} are a family of heuristics that map 
a set of features to their numerical potentials. In the context of cost-optimal 
classical planning, the heuristic value of a state is defined as the sum of 
potentials for all the features that are true in that state. Potential heuristics 
provide an efficient method for computing a lower bound on the cost of a given state.

In this paper, we introduce an alternative use of potential functions to tighten the 
linear relaxation of ReLU units in our HD-MILP-Plan compilation and improve the search 
efficiency of the underlying B\&B solver.
We define the features of the learned NN over its set of hidden units 
$U$ (i.e., gray circles in Figure~\ref{fig:hdmilpplan}), and compute the potential 
contribution (i.e., reward potential) of each hidden unit $u \in U$ to the reward 
function $R$ for any time step $t$. These reward potentials are then used to introduce 
additional constraints on ReLU activations that help guide B\&B search in HD-MILP-Plan. 
Specifically, we are interested in finding a set of reward potentials, denoted as $v^{on}_u$ 
and $v^{off}_u$ representing the activation (i.e., ${P}^{b}_{u,t} = 1$) and the deactivation 
(i.e., ${P}^{b}_{u,t} = 0$) of ReLUs $u\in U$, such that the relation 
$\sum_{u\in U}v^{on}_u {P}^{b}_{u,t} + v^{off}_u (1-{P}^{b}_{u,t}) \geq 
R(\langle Y_{s,t+1} | s \in S \rangle {}^\frown \langle X_{a,t}| a \in A \rangle)$ holds for all feasible values of 
${P}^{b}_{u,t}$, $Y_{s,t+1}$ and $X_{a,t}$ at any time step $t$. Once values $\bar{v}^{on}_u$ 
and $\bar{v}^{off}_u$ are computed, we will add 
$\sum_{u\in U}\bar{v}^{on}_u {P}^{b}_{u,t} + \bar{v}^{off}_u (1-{P}^{b}_{u,t}) \geq 
R(\langle Y_{s,t+1} | s \in S \rangle {}^\frown \langle X_{a,t}| a\in A \rangle)$ as a linear constraint to strengthen HD-MILP-Plan. 
Next we describe bilevel programming that we use to model the problem of finding optimal 
reward potentials.

\subsection{Bilevel Programming}

Bilevel programming~\cite{Bard2000} is an optimization framework for modeling 
two-level asymetrical decision making problems with a leader and a follower problem
where the leader has complete knowledge of the follower, and the follower 
only observes the decisions of the leader to make an optimal decision. Therefore, 
the leader must incorporate the optimal decision of the follower to optimize its 
objective. 

In this work, we use bilevel programming to compactly model the problem of 
finding the optimal reward potentials that has exponential number of constraints. 
In the bilevel programming description of 
the optimal reward potentials problem, the leader selects the optimal values $\bar{v}^{on}_u$ and $\bar{v}^{off}_u$ of 
reward potentials, and the follower selects the values 
of ${P}^{b}_{u,t}$, $Y_{s,t+1}$ and $X_{a,t}$ such that the expression 
$R(\langle Y_{s,t+1} | s \in S \rangle {}^\frown \langle X_{a,t}| a \in A \rangle) - \sum_{u\in U}v^{on}_u {P}^{b}_{u,t} + 
v^{off}_u (1-{P}^{b}_{u,t})$ is maximized. That is, the follower tries to find 
values of ${P}^{b}_{u,t}$, $Y_{s,t+1}$ and $X_{a,t}$ that violate the relation 
$\sum_{u\in U}v^{on}_u {P}^{b}_{u,t} + v^{off}_u (1-{P}^{b}_{u,t}) \geq 
R(\langle Y_{s,t+1} | s \in S \rangle {}^\frown \langle X_{a,t}| a\in A \rangle)$ as much as possible. Therefore the 
leader must select the values $\bar{v}^{on}_u$ and $\bar{v}^{off}_u$ of reward potentials by 
incorporating the optimal decision making model of the follower. Next, we describe the reward potentials for learned NNs.

\section{Reward Potentials for Learned Neural Networks}

In this section, we present the optimal reward potentials problem 
and an efficient constraint generation framework for finding 
reward potentials for learned NNs.

\subsection{Optimal Reward Potentials Problem}

The problem of finding the optimal reward potentials over a set of ReLUs $U$ 
for any time step $t$ can be defined as the following bilevel 
optimization problem.
\subsubsection{Leader Problem}
\begin{align}
&\min_{v^{on}_u, v^{off}_{u}, Y_{s,t}, Y_{s,t+1}, X_{a,t}, {P}^{b}_{u,t}}\sum_{u\in U}v^{on}_u + v^{off}_u\label{ref:L_0}\\
&\text{subject to }\nonumber\\
&\sum_{u\in U}v^{on}_u {P}^{b}_{u,t} + v^{off}_u (1-{P}^{b}_{u,t}) \geq R(\langle Y_{s,t+1} | s \in S \rangle {}^\frown \langle X_{a,t}| a \in A \rangle)\label{ref:L_1}\\
&Y_{s,t}, Y_{s,t+1}, X_{a,t}, {P}^{b}_{u,t} \in \arg \text{Follower Problem}\nonumber
\end{align}
\subsubsection{Follower Problem}
\begin{align}
&\max_{Y_{s,t}, Y_{s,t+1}, X_{a,t}, {P}^{b}_{u,t}} R(\langle Y_{s,t+1} | s \in S \rangle {}^\frown \langle X_{a,t}| a \in A \rangle) - \sum_{u\in U}v^{on}_u {P}^{b}_{u,t} + v^{off}_u (1-{P}^{b}_{u,t})\label{ref:F_0}\\
&\text{subject to }\nonumber\\
&\text{Constraints (\ref{ref:HD_2})\text{ and }(\ref{ref:HD_4}-\ref{ref:HD_7})}\nonumber
\end{align}

In the above bilevel problem, the leader problem selects the values $\bar{v}^{on}_u$ and $\bar{v}^{off}_u$ of the reward 
potentials such that their total sum is minimized 
(i.e., objective function (\ref{ref:L_0})\footnote{The objective function 
(\ref{ref:L_0}) is similar to the objective function of  ``All Syntactic States'' for potential heuristics used in classical planning~\cite{Jendrik2015}.}), and their total weighted sum for all ReLU 
activations is an upper bound to all values of the reward function $R$ (i.e., constraint 
(\ref{ref:L_1}) and the follower problem). Given the values $\bar{v}^{on}_u$ and $\bar{v}^{off}_u$ of the reward potentials, the follower selects the values of decision 
variables $Y_{s,t}$, $Y_{s,t+1}$, $X_{a,t}$ and ${P}^{b}_{u,t}$ such that the difference between 
the value of the reward function $R$ and the sum of reward potentials is maximized subject 
to constraints (\ref{ref:HD_2}) and (\ref{ref:HD_4}-\ref{ref:HD_7}). Next, we show the 
correctness of the optimal reward potentials problem as the bilevel program described by the leader (i.e., objective function (\ref{ref:L_0}) and constraint (\ref{ref:L_1})) and the follower (i.e., objective function (\ref{ref:F_0}) and constraints (\ref{ref:HD_2}) and (\ref{ref:HD_4}-\ref{ref:HD_7})) problems.

\begin{theorem}[Correctness of The Optimal Reward Potentials Problem]
\label{proof:gac}
Given constraints (\ref{ref:HD_2}) and 
(\ref{ref:HD_4}-\ref{ref:HD_7}) are feasible, the optimal reward potentials problem finds the values $\bar{v}^{on}_u$ and $\bar{v}^{off}_u$ of reward potentials 
such that the relation 
$\sum_{u\in U}\bar{v}^{on}_u {P}^{b}_{u,t} + \bar{v}^{off}_u (1-{P}^{b}_{u,t}) \geq 
R(\langle Y_{s,t+1} | s \in S \rangle{}^\frown \langle X_{a,t}| a \in A \rangle)$ holds for all values of ${P}^{b}_{u,t}$, $Y_{s,t+1}$ 
and $X_{a,t}$ at any time step $t$.
\end{theorem}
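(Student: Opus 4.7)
The plan is to argue by contradiction, leveraging the interaction between the leader's feasibility constraint (\ref{ref:L_1}) and the follower's maximization objective (\ref{ref:F_0}). Suppose the optimal reward potentials problem returns values $\bar{v}^{on}_u, \bar{v}^{off}_u$, and for contradiction suppose there exist feasible values $\hat{P}^{b}_{u,t^{\star}}, \hat{Y}_{s,t^{\star}}, \hat{Y}_{s,t^{\star}+1}, \hat{X}_{a,t^{\star}}$ (satisfying constraints (\ref{ref:HD_2}) and (\ref{ref:HD_4}--\ref{ref:HD_7}) at some time step $t^\star$) for which the bound is violated, i.e. $R(\langle \hat{Y}_{s,t^{\star}+1} | s \in S \rangle {}^\frown \langle \hat{X}_{a,t^{\star}} | a \in A \rangle) - \sum_{u \in U} \bar{v}^{on}_u \hat{P}^b_{u,t^\star} + \bar{v}^{off}_u (1-\hat{P}^b_{u,t^\star}) > 0$.

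The first step is to reduce to a single time step: since the constraints (\ref{ref:HD_2}) and (\ref{ref:HD_4}--\ref{ref:HD_7}) defining the follower's feasible set are stationary and $R$ is time-invariant, the follower's problem at any $t$ is the same optimization problem up to relabeling. Thus it suffices to derive a contradiction for $t = t^\star$, which is precisely the time step over which the leader-follower pair is defined.

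Next, I would use the optimality of the follower. Given the leader's choice $\bar{v}^{on}_u, \bar{v}^{off}_u$, let $(Y^{\ast}_{s,t}, Y^{\ast}_{s,t+1}, X^{\ast}_{a,t}, P^{b\ast}_{u,t})$ be the follower's optimal response as referenced in the leader's ``$\arg$ Follower Problem'' coupling constraint. Because the hypothetical violating point $(\hat{Y}, \hat{X}, \hat{P}^b)$ is feasible for the follower and yields a strictly positive follower objective, the follower's optimum must also be strictly positive:
\begin{equation*}
R(\langle Y^{\ast}_{s,t+1} | s \in S \rangle {}^\frown \langle X^{\ast}_{a,t} | a \in A \rangle) - \sum_{u \in U} \bar{v}^{on}_u P^{b\ast}_{u,t} + \bar{v}^{off}_u (1 - P^{b\ast}_{u,t}) > 0.
\end{equation*}
However, this directly contradicts the leader's feasibility constraint (\ref{ref:L_1}), which must hold at the follower's optimal response $(Y^{\ast}, X^{\ast}, P^{b\ast})$ since the leader minimizes subject to (\ref{ref:L_1}) evaluated at that response. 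Hence no such violating point exists, establishing the universal validity of the bound.

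The main obstacle I expect is ensuring the bilevel program is well posed so that an optimal leader solution $\bar{v}^{on}_u, \bar{v}^{off}_u$ together with an optimal follower response actually exists; in particular, one must rule out degeneracy by noting that the assumed feasibility of constraints (\ref{ref:HD_2}) and (\ref{ref:HD_4}--\ref{ref:HD_7}) guarantees the follower's feasible set is nonempty, and one can always pick leader values large enough to satisfy (\ref{ref:L_1}) (since the follower's objective becomes arbitrarily negative as $v^{on}_u, v^{off}_u \to +\infty$ because $P^b_{u,t} \in \{0,1\}$ forces at least one of the two terms to count per unit). Beyond that bookkeeping, the argument reduces to the clean logical observation that the leader's constraint, evaluated at the follower's maximizer, already encodes the worst-case violation across the entire feasible region.
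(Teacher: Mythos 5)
Your proposal is correct and follows essentially the same route as the paper's own proof: a contradiction argument in which a hypothetical violating assignment would give the follower a strictly positive objective, while the leader's constraint (\ref{ref:L_1}) evaluated at the follower's optimal response forces that optimum to be non-positive. Your added remarks on stationarity across time steps and on well-posedness of the bilevel program are sensible bookkeeping the paper leaves implicit, but they do not change the core argument.
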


\begin{proof}[by Contradiction]
Let $\bar{v}^{on}_u$ and $\bar{v}^{off}_u$ denote the values of reward potentials 
selected by the leader problem that violate the relation 
$\sum_{u\in U}\bar{v}^{on}_u {P}^{b}_{u,t} + \bar{v}^{off}_u (1-{P}^{b}_{u,t}) \geq 
R(\langle Y_{s,t+1} | s \in S \rangle {}^\frown \langle X_{a,t}| a \in A \rangle)$ for some values 
$\bar{Y}_{s,t+1}$, $\bar{X}_{a,t}$ and $\bar{P}^{b}_{u,t}$, implying 
$R(\langle \bar{Y}_{s,t+1} | s \in S \rangle {}^\frown \langle \bar{X}_{a,t}| a \in A \rangle) - 
\sum_{u\in U}\bar{v}^{on}_u \bar{P}^{b}_{u,t} + \bar{v}^{off}_u (1-\bar{P}^{b}_{u,t}) > 0$. 
However, the feasibility of constraint (\ref{ref:L_1}) implies that the value of the 
objective function ($\ref{ref:F_0}$) must be non-positive (i.e., the follower 
problem is not solved to optimality), which yields the desired contradiction.
\end{proof}

Note that we omit the case when constraints (\ref{ref:HD_2}) and 
(\ref{ref:HD_4}-\ref{ref:HD_7}) are infeasible because it implies the 
infeasibility of the learned planning problem $\tilde{\Pi}$. 
Next, we describe a finite-time constraint generation algorithm for computing 
reward potentials.

\subsection{Constraint Generation for Computing Reward Potentials}

The optimal reward potentials problem can be solved efficiently through 
the following constraint generation framework that decomposes the problem 
into a master problem and a subproblem.\footnote{As noted by our reviewers, 
our constraint generation framework is related to Counterexample-guided 
Abstraction Refinement (CEGAR)~\cite{Clarke2000}. The clear differences between 
the typical use of CEGAR and our work are: (i) problem formalizations (i.e., bilevel 
programming versus iterative model-checking) and (ii) purposes (i.e., obtaining 
valid bounds on planning reward function $R$ versus verification of an abstract 
model). Naturally, what constitutes a violation is also different (i.e., error on 
reward estimation versus a spurious counterexample).} The master problem finds 
the values $\bar{v}^{on}_u$ and $\bar{v}^{off}_u$ of ReLU potential variables. The 
subproblem finds the values $\bar{P}^{b}_{u,t}$ of ReLU variables that violate 
constraint (\ref{ref:L_1}) the most for given values $\bar{v}^{on}_u$ and $\bar{v}^{off}_u$, 
and also finds the maximum value of reward function $R$ for given $\bar{P}^{b}_{u,t}$ 
which is denoted as $R^{*}(\langle \bar{P}^{b}_{u,t} | u\in U \rangle)$. 
Intuitively, the master problem selects the values $\bar{v}^{on}_u$ and $\bar{v}^{off}_u$ of ReLU 
potentials that are checked by the 
subproblem for the validity of the relation 
$\sum_{u\in U}\bar{v}^{on}_u {P}^{b}_{u,t} + \bar{v}^{off}_u (1-{P}^{b}_{u,t}) \geq 
R(\langle Y_{s,t+1} | s \in S \rangle{}^\frown \langle X_{a,t}| a \in A \rangle)$ for all feasible values of 
${P}^{b}_{u,t}$, $Y_{s,t+1}$ and $X_{a,t}$ at any time step $t$. If a violation is found, 
a linear constraint corresponding to a given $\bar{P}^{b}_{u,t}$ and 
$R^{*}(\langle \bar{P}^{b}_{u,t} | u\in U \rangle)$ is added back to the master problem and the 
procedure is repeated until no violation is found by the subproblem.

\subsubsection{Subproblem $\mathcal{S}$:}

For a complete value assignment $\bar{v}^{on}_u$ 
and $\bar{v}^{off}_u$ to ReLU potential variables, the subproblem optimizes the violation (i.e., objective 
function (\ref{ref:F_0})) with respect to constraints 
(\ref{ref:HD_2}) and (\ref{ref:HD_4}-\ref{ref:HD_7}) as follows.
\begin{align}
&\max_{Y_{s,t}, Y_{s,t+1}, X_{a,t}, {P}^{b}_{u,t}} R(\langle Y_{s,t+1} | s \in S \rangle {}^\frown \langle X_{a,t}| a \in A \rangle) - \sum_{u\in U}\bar{v}^{on}_u {P}^{b}_{u,t} + \bar{v}^{off}_u (1-{P}^{b}_{u,t})\label{ref:Sub_0}\\
&\text{subject to }\nonumber\\
&\text{Constraints (\ref{ref:HD_2})\text{ and }(\ref{ref:HD_4}-\ref{ref:HD_7})}\nonumber
\end{align}

We denote the optimal values of ReLU variables ${P}^{b}_{u,t}$, found by solving the 
subproblem as $\bar{P}^{b}_{u,t}$, and the value of the reward function $R$ 
found by solving the subproblem as $R^{*}(\langle \bar{P}^{b}_{u,t} | u\in U \rangle)$. Further, 
we refer to subproblem as $\mathcal{S}$.

\subsubsection{Master problem $\mathcal{M}$:} 

Given the set of complete value assignments $K$ to ReLU variables with values $\bar{P}^{b,k}_{u,t}$ 
and optimal objective values $R^{*}(\langle\bar{P}^{b,k}_{u,t} | u\in U\rangle)$ for all $k\in K$, 
the master problem optimizes the regularized\footnote{The squared terms penalize arbitrarily 
large values of potentials to avoid numerical issues. A similar numerical issue has been found 
in the computation of potential heuristics for cost-optimal classical planning problems with 
dead-ends~\cite{Jendrik2015}.} sum of reward potentials (i.e., regularized objective function 
(\ref{ref:L_0})) with respect to the modified version of constraint (\ref{ref:L_1}) as follows.
\begin{align}
&\min_{v^{on}_u, v^{off}_u}\sum_{u\in U}v^{on}_u + v^{off}_u + \lambda \sum_{u\in U}(v^{on}_u)^{2} + (v^{off}_u)^{2}\label{ref:M_0}\\
&\text{subject to }\nonumber\\
&\sum_{u\in U}v^{on}_u \bar{P}^{b,k}_{u,t} + v^{off}_u (1-\bar{P}^{b,k}_{u,t}) \geq R^{*}(\langle\bar{P}^{b,k}_{u,t} | u\in U\rangle) \quad \forall{k\in K}\label{ref:M_1}
\end{align}

We denote the optimal values of ReLU potential variables $v^{on}_{u}$ and  $v^{off}_{u}$, 
found by solving the master problem as $\bar{v}^{on}_{u}$ and $\bar{v}^{off}_{u}$, 
respectively. Further, we refer to 
master problem as $\mathcal{M}$.

\subsubsection{Reward Potentials Algorithm} 

Given the definitions of the master problem $\mathcal{M}$ and the subproblem 
$\mathcal{S}$, the constraint generation algorithm for computing 
an optimal reward potential is outlined as follows.

\begin{algorithm}
\footnotesize
\caption{Reward Potentials Algorithm}\label{rewpotalg}
\begin{algorithmic}[1]
\State $\textit{k} \gets 1$, violation $\gets \infty$, $\mathcal{M} \gets$ objective function (\ref{ref:M_0})
\While {violation $>$ 0}
\State $\bar{v}^{on}_u, \bar{v}^{off}_u \gets \mathcal{M}$
\State $\bar{P}^{b,k}_{u,t}, \bar{Y}_{s,t+1}, \bar{X}_{a,t}, R^{*}(\langle\bar{P}^{b,k}_{u,t} | u\in U\rangle) \gets \mathcal{S}(\bar{v}^{on}_u, \bar{v}^{off}_u)$
\State violation = $R(\langle \bar{Y}_{s,t+1} | s \in S \rangle {}^\frown \langle \bar{X}_{a,t}| a \in A \rangle) - \sum_{u\in U}\bar{v}^{on}_u \bar{P}^{b,k}_{u,t} + \bar{v}^{off}_u (1-\bar{P}^{b,k}_{u,t})$
\State $\mathcal{M} \gets \mathcal{M} \cup \sum_{u\in U}v^{on}_u \bar{P}^{b,k}_{u,t} + v^{off}_u (1-\bar{P}^{b,k}_{u,t}) \geq R^{*}(\langle\bar{P}^{b,k}_{u,t} | u\in U\rangle)$ (i.e., update constraint (\ref{ref:M_1}))
\State $k \gets k + 1$
\EndWhile
\end{algorithmic}
\end{algorithm}

Given constraints (\ref{ref:HD_2}) and (\ref{ref:HD_4}-\ref{ref:HD_7}) are feasible, 
Algorithm~\ref{rewpotalg} iteratively computes reward potentials 
$v^{on}_{u}$ and  $v^{off}_{u}$ (i.e., line 3), and first checks if there exists an 
activation pattern, that is a complete value assignment $\bar{P}^{b,k}_{u,t}$ to ReLU variables, 
that violates constraint (\ref{ref:L_1}) (i.e., lines 4 and 5), and then returns the optimal 
reward value $R^{*}(\langle\bar{P}^{b,k}_{u,t} | u\in U\rangle)$ for the violating activation pattern. 
Given the optimal reward value $R^{*}(\langle\bar{P}^{b,k}_{u,t} | u\in U\rangle)$ for the violating 
activation pattern, constraint (\ref{ref:M_1}) is updated (i.e., lines 6-7). Since there are 
finite number of activation patterns and solving $\mathcal{S}$ gives the maximum value of 
$R^{*}(\langle \bar{P}^{b,k}_{u,t} | u\in U \rangle)$ for each pattern $k \in \{1, \dots, 2^{|U|}\}$, the Reward 
Potentials Algorithm~\ref{rewpotalg} terminates in at most $k \leq 2^{|U|}$ iterations with 
an optimal reward potential for the learned NN.

\subsubsection{Increasing the Granularity of the Reward Potentials Algorithm} 

The feature space of Algorithm~\ref{rewpotalg} can be enhanced
to include information on each ReLUs input and/or output. Instead of computing reward 
potentials for only the activation $\bar{v}^{on}_{u}$ and deactivation 
$\bar{v}^{off}_{u}$ of ReLU $u\in U$, we (i) introduce an interval parameter $N$ to 
split the output range of each ReLU $u$ into $N$ equal size intervals, (ii) introduce 
auxiliary Boolean decision variables ${P'}^{b}_{i,u,t}$ to represent the activation interval 
of ReLU $u$ such that ${P'}^{b}_{i,u,t}=1$ if and only if the output of ReLU $u$ is within 
interval $i\in \{1,\dots,N\}$, and ${P'}^{b}_{i,u,t}=0$ otherwise, and (iii) compute reward 
potentials for each activation interval $\bar{v}^{on}_{u,1}, \dots, \bar{v}^{on}_{u,N}$ 
and deactivation $\bar{v}^{off}_{u}$ of ReLU $u\in U$.

\subsection{Strengthening HD-MILP-Plan}

Given optimal reward potentials $\bar{v}^{on}_{u,1}, \dots, \bar{v}^{on}_{u,N}$ and 
$\bar{v}^{off}_{u}$, the MILP compilation of HD-MILP-Plan is strengthened through the 
addition of following constraints:

\begin{align}
&\sum_{u\in U}\sum^{N}_{i=1}\bar{v}^{on}_{u,i} {P'}^{b}_{i,u,t} + \bar{v}^{off}_u (1-x^{t}_{u}) \geq R(\langle Y_{s,t+1} | s \in S \rangle{}^\frown \langle X_{a,t}| a \in A \rangle) \label{ref:S_0}\\
&\sum^{N}_{i=1}{P'}^{b}_{i,u,t} = {P}^{b}_{u,t} \label{ref:S_1}\\
&N_u\frac{(i-1)}{N} {P'}^{b}_{i,u,t} \leq {P}_{u,t} \leq N_u - (N_u - N_u\frac{i}{N}) {P'}^{b}_{i,u,t} \quad \forall{i\in \{1,\dots,N\}, u\in U} \label{ref:S_2}
\end{align}
for all time steps $t\in \{1,\dots,H\}$ where $N_u$ denotes the upperbound obtained 
from performing forward reachability on the output of each ReLU $u\in U$ 
in the learned NN. Briefly, constraint (\ref{ref:S_0}) provides the upperbound on 
the reward function $R$ as a function of ReLU activation intervals and deactivations. 
Constraint (\ref{ref:S_1}) ensures that (i) at most one 
auxillary variable ${P'}^{b}_{i,u,t}$ is selected, and (ii) at least one 
auxillary variable ${P'}^{b}_{i,u,t}$ is selected if and only if ReLU $u$ is 
activated. Constraint (\ref{ref:S_2}) ensures that the output of each ReLU is within 
its selected activation interval. Next, we present our experimental results 
to demonstrate the efficiency and the utility of computing reward potential and 
strengthening HD-MILP-Plan.

\section{Experimental Results}
In this section, we present computational results on (i) the convergence of 
Algorithm~\ref{rewpotalg}, and (ii) the overall strengthening of HD-MILP-Plan 
with the addition of constraints (\ref{ref:S_0}-\ref{ref:S_2}) for the task 
of planning over long horizons. First, we present results on the overall 
efficiency of Algorithm~\ref{rewpotalg} and the strengthening of HD-MILP-Plan 
over multiple learned planning instances. Then, we focus on the most 
computationally expensive domain identified by our experiments to further investigate 
the convergence behaviour of Algorithm~\ref{rewpotalg} and the overall strengthening 
of HD-MILP-Plan as a function of time.

\subsection{Experimental Setup}

The experiments were run on a MacBookPro with 2.8 GHz Intel Core i7 16GB memory. 
All instances and the respective learned neural networks from the HD-MILP-Plan 
paper~\cite{Say2017}, namely \textit{Navigation}, \textit{Reservoir Control} and 
\textit{HVAC}~\cite{Say2017}, were 
selected.\footnote{https://github.com/saybuser/HD-MILP-Plan} Both domain instance 
sizes and their 
respective learned NN sizes are detailed in Table~\ref{tab:domains} where columns 
from left to right denote the name of problem instances, the structures of the learned 
NNs where each number denotes the width of a layer and the values of the planning horizon 
$H$, respectively. The range 
bounds on action variables for Navigation domains were constrained to $[-0.1, 0.1]$. 
CPLEX 12.9.0~\cite{IBM2019} solver was used to optimize both 
Algorithm~\ref{rewpotalg}, and HD-MILP-PLan, with 6000 seconds 
of total time limit per domain instance. In our experiments, we show results for 
the base model (i.e., objective (\ref{ref:HD_0}) and constraints 
(\ref{ref:HD_1}-\ref{ref:HD_7})) and the strengthened model with the addition of 
constraints (\ref{ref:S_0}-\ref{ref:S_2}) for the values of interval parameter 
$N=2,3$.\footnote{The preliminary experimental results for interval parameter $N=1$ 
have not shown significant improvements over the base encoding of HD-MILP-Plan.} 
Finally in the master problem, we have chosen 
the regularizer constant $\lambda$ in the objective function (9) to be 
$\frac{1}{\sqrt{M}}$ where $M$ is the large constant used in the big-M constraints 
of HD-MILP-Plan (i.e., constraints (\ref{ref:HD_4}-\ref{ref:HD_5})).

\begin{table}
  \centering
  \caption{Domain and learned NN descriptions where columns 
from left to right denote the name of problem instances, the structures of NNs used to learn each transition model $\tilde{T}$ where each number denotes the width of a layer, and the values of the planning horizon 
$H$, respectively.}
  \label{tab:domains}
  \begin{tabular}{| l | c | c |}
    \hline
    Domain Instance & Network Structure & Horizon \\
    \hline
    Navigation (8-by-8 maze) & 4:32:32:2 & 100 \\ \hline
    Navigation (10-by-10 maze) & 4:32:32:2 & 100 \\ \hline
    Reservoir Control (3 reservoirs) & 6:32:3 & 500 \\ \hline
    Reservoir Control (4 reservoirs) & 8:32:4 & 500 \\ \hline
    HVAC (3 rooms) & 6:32:3 & 100 \\ \hline
    HVAC (6 rooms) & 12:32:6 & 100 \\ \hline
  \end{tabular}
\end{table}

\subsection{Overall Results}

In this section, we present the experimental results on (i) the computation of 
the optimal reward potentials using Algorithm~\ref{rewpotalg}, (ii) and the performance 
of HD-MILP-Plan with the addition of constraints (\ref{ref:S_0}-\ref{ref:S_2}) 
over multiple learned planning instances over long horizons. Table~\ref{tab:overall} 
summarizes the computational results and highlights the best performing 
HD-MILP-Plan settings for each learned planning instance.

\begin{table}
  \centering
  \caption{Summary of experimental results on the computationally efficiency 
  of Algorithm~\ref{rewpotalg} and HD-MILP-Plan with the addition of constraint 
  (\ref{ref:S_0}-\ref{ref:S_2}) over multiple learned planning instances with long horizons.}
  \label{tab:overall}
  \begin{tabular}{| l | c | c | c | c | c | c | c |}
    \hline
    Domain Setting & Alg.~\ref{rewpotalg} & Cumul. & Primal & Dual & Open & Closed \\
    \hline
    Nav,8,100,Base & - & 6000 & - & -261.4408 & 16536 & 27622 \\ \hline
    \textbf{Nav,8,100,N=2} & 345 & 6000 & - & \textbf{-267.1878} & 6268 & 15214 \\ \hline
    Nav,8,100,N=3 & 1150 & 6000 & - & -267.056 & 6189 & 12225 \\ \hline
    Nav,10,100,Base & - & 6000 & - & -340.5974 & 17968 & 35176 \\ \hline
    \textbf{Nav,10,100,N=2} & 800 & 6000 & - & \textbf{-340.6856} & 14435 & 27651 \\ \hline
    Nav,10,100,N=3 & 1700 & 6000 & - & -339.8124 & 2593 & 7406 \\ \hline
    HVAC,3,100,Base & - & 260.21 & Opt. found & Opt. proved & 0 & 289529 \\ \hline
    \textbf{HVAC,3,100,N=2} & 7 & \textbf{88.21} & Opt. found & Opt. proved & 0 & 2501 \\ \hline
    HVAC,3,100,N=3 & 9 & 194.44 & Opt. found & Opt. proved & 0 & 10891 \\ \hline
    HVAC,6,100,Base & - & 6000 & -1214369.086 & -1213152.304 & 618687 & 648207 \\ \hline
    \textbf{HVAC,6,100,N=2} & 8 & 6000 & -1214365.427 & \textbf{-1213199.787} & 554158 & 567412 \\ \hline
    \textbf{HVAC,6,100,N=3} & 10 & 6000 & \textbf{-1214364.704} & -1213025.189 & 1011348 & 1021637 \\ \hline
    \textbf{Res,3,500,Base} & - & \textbf{33.01} & Opt. found & Opt. proved & 0 & 1 \\ \hline
    Res,3,500,N=2 & 1 & 99.81 & Opt. found & Opt. proved & 0 & 714 \\ \hline
    Res,3,500,N=3 & 2 & 90.27 & Opt. found & Opt. proved & 0 & 674 \\ \hline
    Res,4,500,Base & - & 300.71 & Opt. found & Opt. proved & 0 & 1236 \\ \hline
    \textbf{Res,4,500,N=2} & 7 & \textbf{109.66} & Opt. found & Opt. proved & 0 & 1924 \\ \hline
    Res,4,500,N=3 & 6 & 232.19 & Opt. found & Opt. proved & 0 & 1294 \\ \hline
  \end{tabular}
\end{table}

The first column of Table~\ref{tab:overall} identifies the domain setting of each row. 
The second column denotes the runtime of Algorithm~\ref{rewpotalg} in seconds. The third column (i.e., Cumul.) denotes the cumulative runtime of Algorithm~\ref{rewpotalg} and HD-MILP-Plan 
in seconds. The remaining 
columns provide information on the performance of HD-MILP-Plan. Specifically, the fourth column 
(i.e., Primal) denotes the value of the incumbent plan found by HD-MILP-Plan, the fifth 
column (i.e., Dual) denotes the value of the duality bound found by HD-MILP-Plan, and the 
sixth and seventh columns (i.e., Open and Closed) denote the number of open and closed nodes in 
the B\&B tree respectively. The bolded values indicate the best performing HD-MILP-Plan settings for each 
learned planning instance where the performance of each setting is evaluated first based on the 
runtime performance (i.e., Cumul. column), followed by the quality of incumbent plan 
(i.e., Primal column) and duality bound (i.e., Dual column) obtained by HD-MILP-Plan.

In total of five out of six instances, we observe that strengthened HD-MILP-Plan with interval 
parameter $N=2$ performed the best. The pairwise comparison of the base HD-MILP-Plan and strengthened 
HD-MILP-Plan with interval parameter $N=3$ shows that in almost all instances, the strengthened 
model performed better in comparison to the base model. The only instance in which the base model 
significantly outperformed the other two was the Reservoir Control domain with three reservoirs where 
the B\&B solver was able to find an optimal plan in the root node. Overall, we found that especially 
in the instances where the optimality was hard to prove within the runtime limit of 6000 seconds 
(i.e., all Navigation instances and HVAC domain with 6 rooms), strengthened HD-MILP-Plan explored 
signigicantly less number of nodes in general while obtaining either higher quality incumbent plans 
or lower dual bounds. We observe that Algorithm~\ref{rewpotalg} terminated with optimal reward 
potentials in less than 10 seconds in both Reservoir Control and HVAC domains, and took as 
much as 1700 seconds in Navigation domain -- highlighting the effect of NN size and 
complexity (i.e., detailed in Table~\ref{tab:domains}) on the runtime of Algorithm~\ref{rewpotalg}.
As a result, next we focus on the most computationally expensive domain identified by our experiments, 
namely Navigation, to get a better understanding on the convergence behaviour of 
Algorithm~\ref{rewpotalg} and the overall efficiency of HD-MILP-Plan as a function of time.

\subsection{Detailed Convergence Results on Navigation Domain}

In this section, we inspect the convergence of Algorithm~\ref{rewpotalg} 
in the Navigation domain for computing an optimal reward potential for the learned NNs.

\begin{figure}[t!]
    \centering
    \includegraphics[width=.8\linewidth]{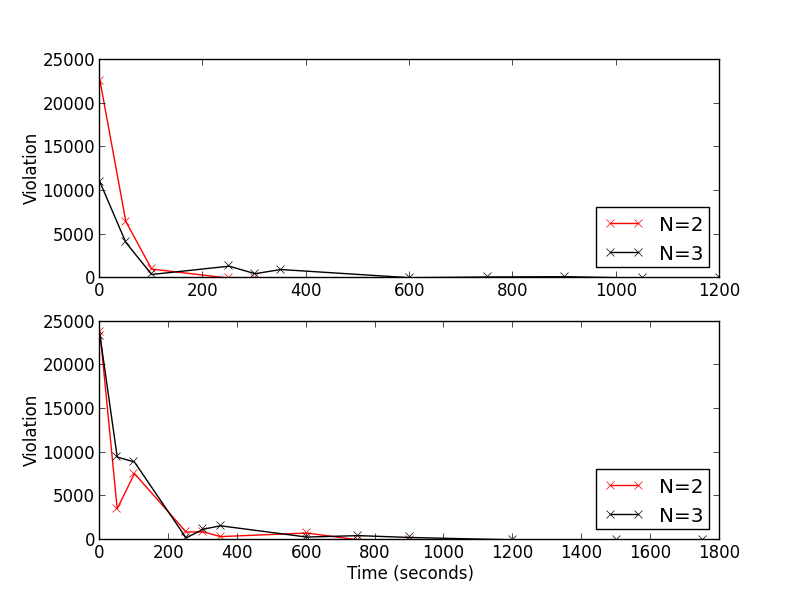}
  \caption{Convergence of Algorithm~\ref{rewpotalg} as 
  a function of time for the learned NNs of both Navigation 8-by-8 (i.e., top) and 
  Navigation 10-by-10 (i.e., bottom) planning instances. The violation of constraint (\ref{ref:L_1}) decreases exponentially as a function of time, showcasing a long-tail runtime behaviour and terminates 
with optimal reward potentials.}
  \label{fig:Viol_Nav}
\end{figure}

Figure~\ref{fig:Viol_Nav} visualizes the violation of constraint (\ref{ref:L_1}) as a function 
of time over the computation of optimal reward potentials using the Reward Potentials 
Algorithm~\ref{rewpotalg} for the learned NNs of both Navigation 8-by-8 (i.e., top) 
and Navigation 10-by-10 (i.e., bottom) planning instances. In both, we 
observe that the violation of constraint (\ref{ref:L_1}) decreases exponentially as 
a function of time, showcasing a long-tail runtime behaviour and terminates 
with optimal reward potentials.

\subsection{Detailed Strengthening Results on Navigation Domain}

Next, we inspect the overall strengthening of 
HD-MILP-Plan with respect to its underlying linear relaxation and search 
efficiency as a result of constraints (\ref{ref:S_0}-\ref{ref:S_2}), for the 
task of planning over long horizons in the Navigation domain.

\begin{figure}[t!]
\centering
\includegraphics[width=.8\linewidth]{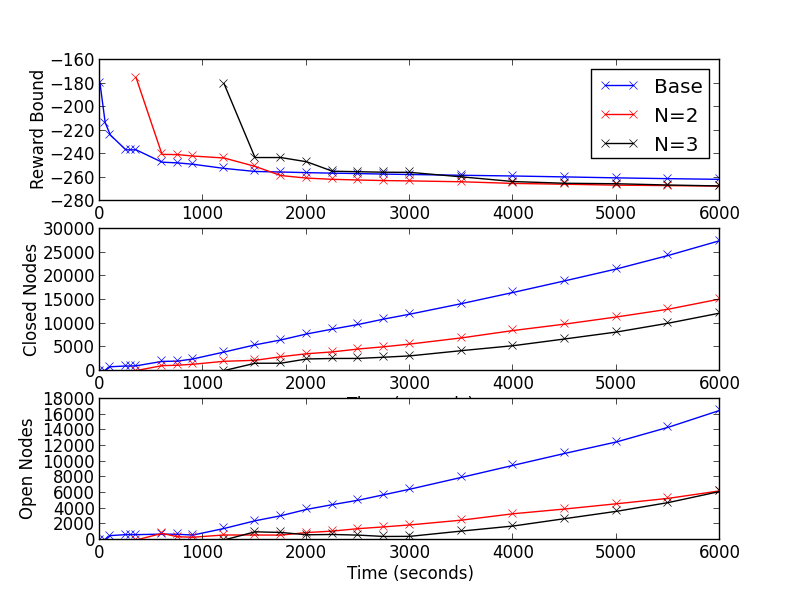}
\caption{Linear relaxation and search efficiency comparisons in Navigation domain 
with an 8-by-8 maze between the base and the strengthened HD-MILP-Plan using 
Algorithm~\ref{rewpotalg} with interval parameter $N=2,3$. Overall, we observe 
that HD-MILP-Plan with constraints (\ref{ref:S_0}-\ref{ref:S_2}) outperforms 
the base HD-MILP-Plan by 1700 and 3300 seconds with interval parameter $N=2,3$, 
respectively.}
\label{fig:Strength_Nav_8}
\end{figure}
\begin{figure}[t!]
\centering
\includegraphics[width=.8\linewidth]{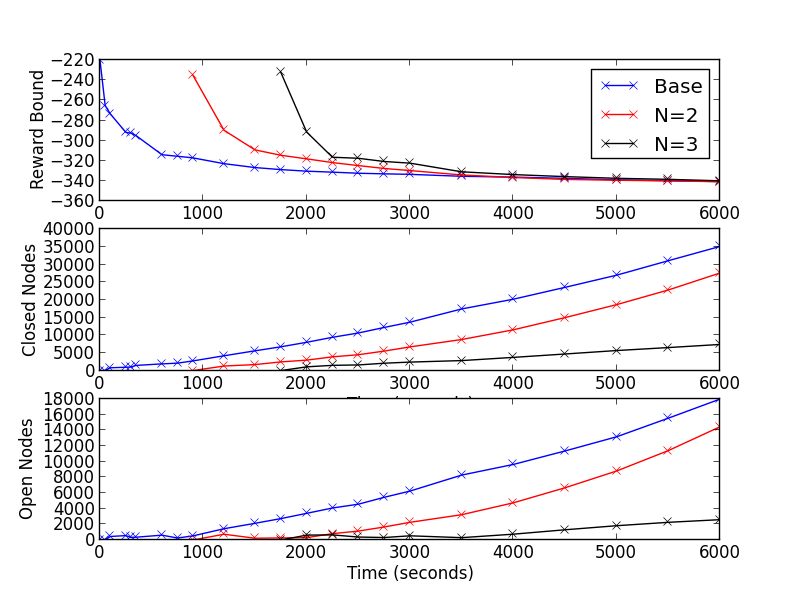}
\caption{Linear relaxation and search efficiency comparisons in Navigation domain 
with an 10-by-10 maze between the base and the strengthened HD-MILP-Plan using 
Algorithm~\ref{rewpotalg} with interval parameter $N=2,3$. Overall, we observe that 
HD-MILP-Plan with constraints (\ref{ref:S_0}-\ref{ref:S_2}) obtains a tighter bound 
compared to the base HD-MILP-Plan by 3750 seconds and reaches almost the same bound 
by the time limit (i.e., 6000 seconds) with interval parameter $N=2,3$, respectively.}
\label{fig:Strength_Nav_10}
\end{figure}

Figures~\ref{fig:Strength_Nav_8} and~\ref{fig:Strength_Nav_10} visualize the overall effect of incorporating 
constraints (\ref{ref:S_0}-\ref{ref:S_2}) into HD-MILP-Plan as a function of time 
for the Navigation domain 
with (a) 8-by-8 and (b) 10-by-10 maze sizes. In both Figures~\ref{fig:Strength_Nav_8} and~\ref{fig:Strength_Nav_10}, linear relaxation (i.e. top), number of closed nodes (i.e., middle), 
and number open nodes (i.e., bottom), are displayed as a function of time. The 
inspection of both Figures~\ref{fig:Strength_Nav_8} and~\ref{fig:Strength_Nav_10} show that once the reward 
potentials are computed, the addition of constraints (\ref{ref:S_0}-\ref{ref:S_2}) 
allows HD-MILP-Plan to 
obtain a tighter bound by exploring signigicantly less number of nodes. In the 
8-by-8 maze instance, we observe that HD-MILP-Plan with constraints (\ref{ref:S_0}-\ref{ref:S_2}) outperforms 
the base HD-MILP-Plan by 1700 and 3300 seconds with interval parameter $N=2,3$, 
respectively. In the 10-by-10 maze instance, we observe that HD-MILP-Plan with 
constraints (\ref{ref:S_0}-\ref{ref:S_2}) obtains a tighter bound compared to the 
base HD-MILP-Plan by 3750 
seconds and almost reaches the same bound by the time limit (i.e., 6000 seconds) 
with interval parameter $N=2,3$, respectively. 

The inspection of the top subfigures in Figures~\ref{fig:Strength_Nav_8} 
and~\ref{fig:Strength_Nav_10} shows that increasing the value of the interval parameter 
$N$ increases the computation time of Algorithm~\ref{rewpotalg}, but can also increase 
the search efficiency of the underlying B\&B solver through increasing its exploration 
and pruning capabilities, as demonstrated by the middle and bottom subfigures in 
Figures~\ref{fig:Strength_Nav_8} and~\ref{fig:Strength_Nav_10}. Overall from 
both instances, we conclude that HD-MILP-Plan with constraints (\ref{ref:S_0}-\ref{ref:S_2}) 
obtains a linear relaxation that is at least as good as the base HD-MILP-Plan by exploring 
significantly less number of nodes in the B\&B search tree.

\section{Related Work}

In this paper, we have focused on the important problem of improving the efficiency 
of B\&B solvers for optimal planning with learned NN transition models in continuous 
action and state spaces. Parallel to this work, 
planning and decision making in discrete action and state 
spaces~\cite{Lombardi2016,Say2018a,Say2018b},
verification of learned NNs~\cite{Katz2017,Ehlers2017,Huang2017,Narodytska2018}, 
robustness evaluation of learned NNs~\cite{Tjeng2019} and defenses to adversarial 
attacks for learned NNs~\cite{Kolter2017} have been studied with the focus of solving 
very similar decision making problems. For example, the verification problem solved by 
Reluplex~\cite{Katz2017}\footnote{Reluplex~\cite{Katz2017} is a SMT-based learned NN 
verification software.} is very similar to the planning problem solved by 
HD-MILP-Plan~\cite{Say2017} without the objective function and horizon $H=1$. 
Interestingly, the verification problem can also be modeled as an optimization 
problem~\cite{Bunel2017} and potentially benefit from the findings presented in 
this paper. For future work, we plan to explore how our findings in this work 
translate to solving other important tasks for learned neural networks.

\section{Conclusion}

In this paper, we have focused on the problem of improving the linear relaxation 
and the search efficiency of MILP models for decision making with learned NNs. 
In order to tacke this problem, we used bilevel programming to correctly model
the optimal reward potentials problem. We then introduced a novel finite-time 
constraint generation algorithm for computing the potential contribution of each 
hidden unit to the reward function of the planning problem. Given the precomputed 
values of the reward potentials, we have introduced constraints to 
tighten the bound on the reward function of the planning problem. Experimentally, 
we have shown that our constraint generation algorithm efficiently computes reward 
potentials for learned NNs, and the overhead computation is justified by the overall 
strengthening of the underlying MILP model as demonstrated on the task of planning 
over long horizons. With this paper, we have shown the \textit{potential} of bridging 
the gap between two seemingly distant literatures; the research on planning heuristics 
and decision making with learned NN models in continuous action and state spaces.

%
%
%
 \bibliographystyle{splncs04}
 \bibliography{bibfile}

\end{document}